\newtheorem{theorem}{Theorem}[section]
\newtheorem{prop}{Proposition}
\theoremstyle{definition}
\newtheorem{definition}{Definition}[section]
\newcommand{\lov}{Lov{\'a}sz}
\def\vec#1{\mbox{\bf #1}}
\def\mat#1{\mbox{\bf #1}}
\runningtitle{Lov{\'a}sz Convolutional Networks}
\begin{document}

%

\def\vec#1{\mbox{\bf #1}}
\def\mat#1{\mbox{\bf #1}}
%

\twocolumn[
\aistatstitle{Lov{\'a}sz Convolutional Networks}

\aistatsauthor{
		Prateek Yadav$^1$ \quad
		Madhav Nimishakavi$^1$\quad
		Naganand Yadati$^1$\\
		\textbf{Shikhar Vashishth}$^1$\quad
		\textbf{Arun Rajkumar}$^2$\quad
		\textbf{Partha Talukdar}$^1$\\
		$^1$ Indian Institute of Science\\
		$^2$ Conduent Labs, India \\
		{\tt \small \{prateekyadav,madhav,naganand,shikhar,ppt\}@iisc.ac.in} \\
		{\tt \small vdrn485@gmail.com}
}

\aistatsaddress{ } ]

\begin{abstract}
    Semi-supervised learning on graph structured data has received significant attention with the recent introduction of Graph Convolution Networks (GCN). While traditional methods have focused on optimizing a loss augmented with  Laplacian regularization framework, GCNs perform an implicit Laplacian type regularization to capture local graph structure. In this work, we propose  \emph{\lov{} Convolutional Network} (LCNs) which are capable of incorporating global graph properties. LCNs achieve this by utilizing \lov{}'s orthonormal embeddings of the nodes. We analyse local and global properties of graphs and demonstrate settings where LCNs tend to work better than GCNs. We validate the proposed method on standard random graph models such as stochastic block models (SBM) and certain community structure based graphs where LCNs outperform GCNs and learn more intuitive embeddings. We also perform extensive binary and multi-class classification experiments on real world datasets to demonstrate LCN's effectiveness. In addition to simple graphs, we also demonstrate the use of LCNs on hyper-graphs by identifying settings where they are expected to work better than GCNs.
\end{abstract}

\section{Introduction}
\vspace{-3mm}

Learning on structured data has received significant interest in recent years \citep{getoor2007introduction,subramanya2014graph}. Graphs are ubiquitous, several real world data-sets can be naturally represented as graphs; knowledge graphs \citep{yago07, dbpedia07, freebase08}, protein interaction graphs \citep{protein17}, social network graphs \citep{sn10, sn_www10, sn12}, citation networks \citep{citeseer98, cn03, ccnd08} to name a few. These graphs typically have a large number of nodes and manually labeling them as belonging to a certain class is often prohibitive in terms of resources needed. A common approach is to pose the classification problem as a semi-supervised graph transduction problem where one wishes to label all the nodes of a graph using the labels of a small subset of nodes.

Recent approaches to the graph transduction problem rely on the assumption that the labels of nodes are  related to the structure of the graph. A common approach is to use the Laplacian matrix associated with a graph as form of a structural regularizer for the learning problem. While the Laplacian regularization is done explicitly in \citep{shivani_icml06, sslintroicml03, Zhou2004, sslintro06, planetoid_icml16}, more recent deep learning based Graph Convolution Network (GCN) approaches do an implicit Laplacian type regularization \citep{dcnn_nips16, gcniclr17, co_self_gcn_aaai18, dual_gcn_www18}. While these traditional methods work reasonably well for several real world problems, our extensive experiments show that they may not be the best methods for tasks involving communities and there is a scope for significant improvement in such cases.

\begin{figure*}[tbh]
	\begin{subfigure}{.5\textwidth}
		\centering
		\includegraphics[width=0.95\linewidth]{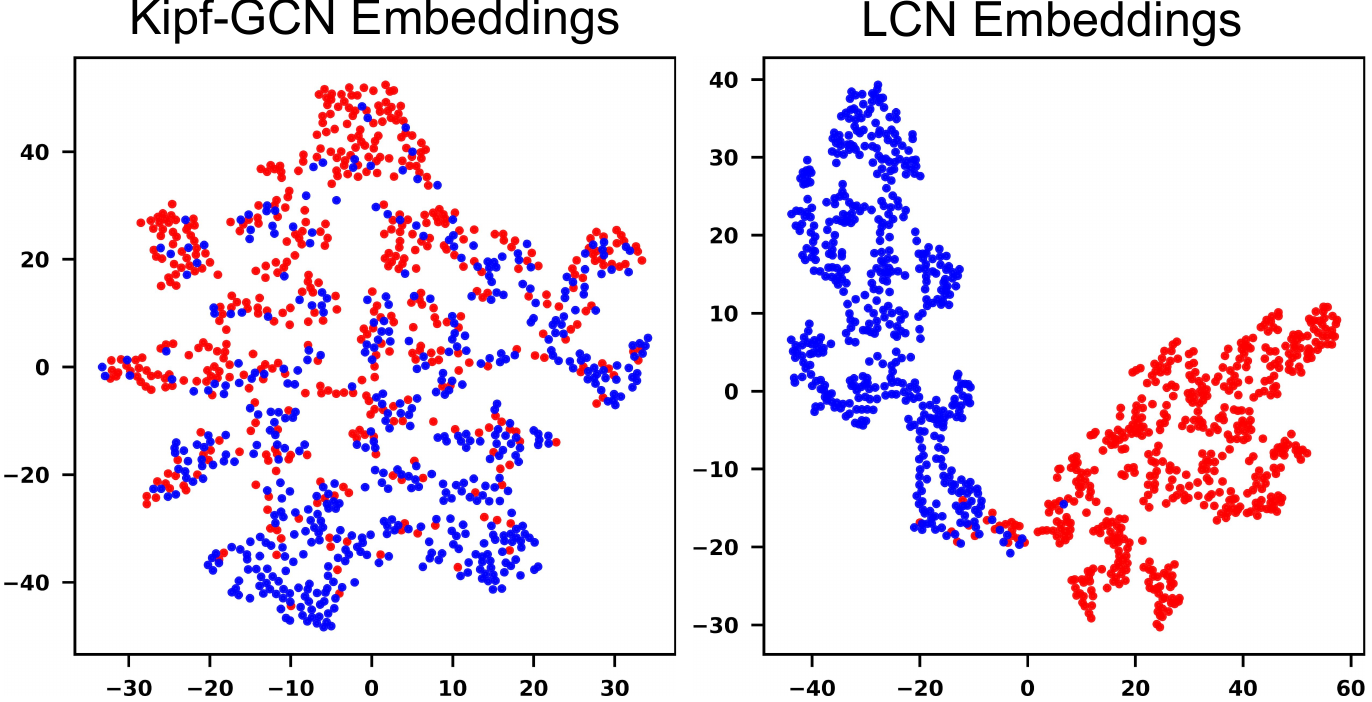}
		\caption{\small Binary classification }
		\label{fig1:2d-embed}
	\end{subfigure}
	\begin{subfigure}{.5\textwidth}
		\centering
		\includegraphics[width=0.95\linewidth]{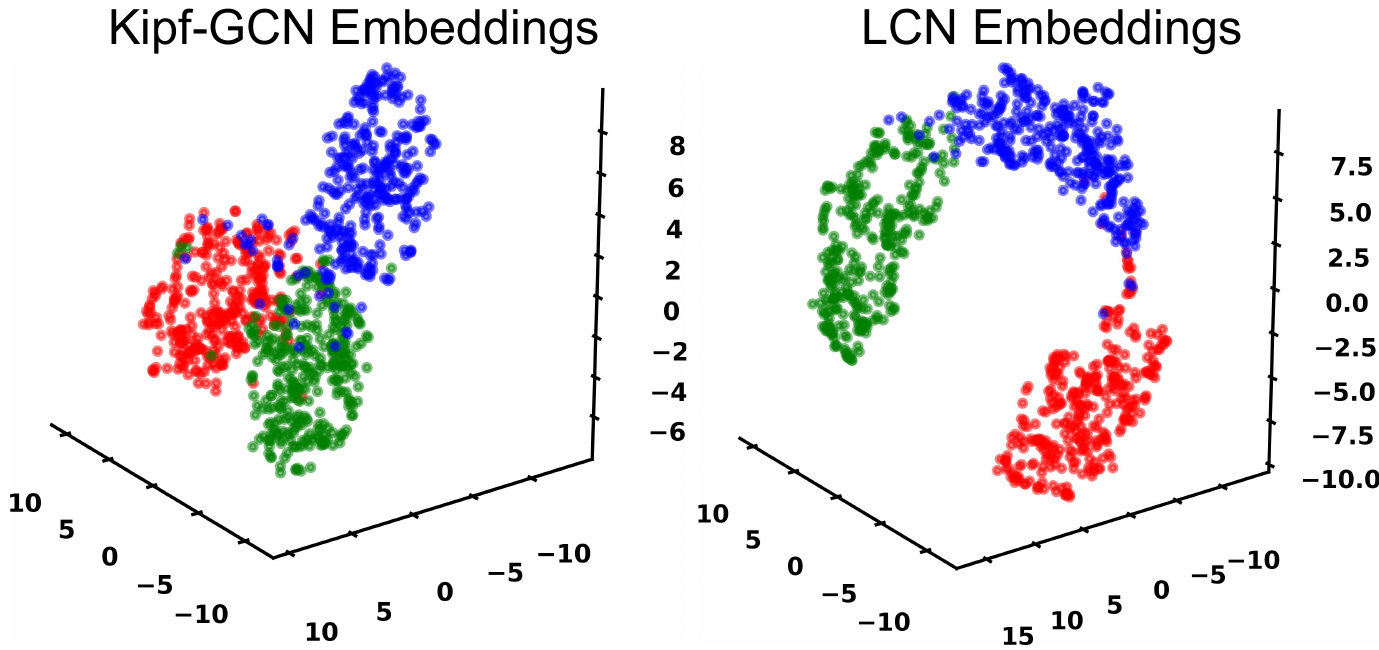}
		\caption{\small 3-class classification}
		\label{fig2:3d-embed}
	\end{subfigure}
	\caption{\small Node embeddings for SBM Experiments}
\end{figure*}

In this work, we propose a graph convolutional network based approach to solve the semi-supervised learning problem on graphs that typically have a community structure. An extensively studied model for communities is the Stochastic block model (SBM) which is a random graph model where the nodes of a graph exhibit community structure i.e., the nodes belonging to same community have a larger probability of having an edge between them than those in different communities. In this work, we propose the \emph{\lov{} Convolutional Network} (LCN) which, instead of the traditional Laplacian, uses the embeddings of nodes that arise from \lov{}'s orthogonal representations as an implicit regularizer. The \lov{} regularization, as we will see, is tightly coupled to the coloring of the complement graph of a given graph and hence often produce remarkably superior embeddings than those obtained using the Laplacian regularization for graphs which have a community structure. Intuitively, the optimal coloring of the complement of a graph can be viewed as a way to associate same color to nodes belonging to a same community. As \lov{} embeddings also tend to embed nodes with same colors to similar points in Euclidean space, the proposed model performs much well in practice. Figure \ref{fig1:2d-embed} and Figure \ref{fig2:3d-embed} illustrate this phenomenon using examples for a binary and a three class classification problem where the graph is generated using a stochastic block model. As can be seen, the average distance between embeddings learnt using the LCN is much better than using traditional graph based convolution networks.
We make the following contributions in this work:
\begin{itemize}
\itemsep0em
    \item We propose the \emph{\lov{} Convolutional Network (LCN)} for the problem of semi-supervised learning. LCN combines the power of using the \lov{} embeddings with GCNs.
    \item We analyze various types of graphs and identify the classes of graphs where LCN  performs much better than existing methods. In particular, we demonstrate that by keeping the optimal coloring, a global property of the graph, fixed and increasing the number edges to the graph, LCNs outperforms traditional GCNs. 
    \item We carry out extensive experiments on both synthetic and real world datasets and show significant improvement using LCNs than state of the art algorithms for semi-supervised graph transduction. 
\end{itemize}
The codes for our model are provided as supplementary material.

\section{Related Work}
\vspace{-3mm}
\label{sec:related}

The work that is most closely related to ours is \citep{Shivanna2015} which proposes a spectral regularized orthogonal embedding method for graph transduction (SPORE). While they use a \lov{} embedding based kernel for explicit regularization, the focus is on computing the embedding efficiently using a special purpose optimization routine. Our work on the other hand proposes a deep learning based \lov{} convolutional network which differs from the traditional loss plus explicit regularizer approach of  \citep{Shivanna2015} and our experimental results confirm that the proposed LCN approach performs significantly better than SPORE. The use of explicit Laplacian regularizer for semi-supervised learning problems on graphs has been explored in \citep{Ando2007,shivani_icml06} where the focus is to derive generalization bounds for learning on graphs.  However, as we will discuss in the sequel, there are  settings where \lov{} embeddings are more natural in capturing the global property of graphs than the Laplacian embeddings and this reflects in our experimental results as well.  More recently \citep{dual_gcn_www18} propose a dual convolution approach to capture global graph property using positive pointwise mutual information (PPMI). We differ from this approach in defining global property in terms of coloring of the complement graph as opposed to computing semantic similarity using random walks on the graph as done in \citep{dual_gcn_www18}. \lov{} based kernels for graphs have been explored in the context of other machine learning problems such as clustering etc. in \citep{Johansson2014}. \cite{Jethava2013} show an interesting connection between \lov{} $\vartheta$ function and one class SVMs. \\
There has been considerable amount of work in extending well established deep learning architectures for graphs. \cite{gcn_iclr14,gcn_arxiv15,gcn_nips15,chebnet_nips16} extend Convolutional Neural Networks (CNN) for graphs, while \cite{Jain2016} propose Recurrent Neural Networks (RNN) for graphs. \citet{gcniclr17} propose Graph Convolutional Networks which achieve promising results for the problem of semi-supervised classification on graphs. Most recently, a faster version of GCN, for inductive learning on graphs, has been proposed by \citep{Chen2018}. An extension to GCNs based on graph partition is proposed recently by \citep{liao2018graph}.
However, as we show in our experiments, there are several natural settings where the proposed LCN performs much better than the state of the art GCNs in various problems of interest.
\section{Problem Setting and Preliminaries}
\vspace{-3mm}
\label{sec:prelim}

We work in the semi-supervised graph transduction setting where we are given a graph $G(V,E)$, where $V$ denotes the set of vertices with cardinality $n$ and $E$ is the edge set. We are given the labels  ($\{0,1\}$ in the case of binary classification) of a subset of nodes $(m < n)$ of $V$ and the goal is to predict the labels of the remaining nodes as accurately as possible.  Given a graph $G(V,E)$, $\alpha(G)$ denotes the maximum independence number of the graph i.e., the size of the set containing the maximum number of non-adjacent nodes in $G$. A coloring of $G$ corresponds to an assignment of colors to nodes of the graph such that no two nodes with the same color have an edge between them.  $\chi(G)$ denotes the chromatic number of $G$ which is the minimum number of colors needed to color $G$. We denote the complement of a graph by $\bar{G}(V,\bar{E})$. An edge $(u,v)$ is present in $\bar{G}$ if and only if it is not present in $G$.  It is easily seen that for any graph $G$, $\alpha(G) \le \chi(\bar{G})$. A clique is a fully connected graph which has edges between all pairs of nodes. We assume that there is a natural underlying manner in which the graph structure is related to the class labels.  In what follows, we recall certain classes of graphs and a certain type of graph embedding which will be of interest in the rest of the paper. \\ 
\textbf{SBM Graphs}:  The Stochastic Block Model (SBM) \citep{holland1983stochastic, condon1999algorithms} is a generative model for random graphs. They are a generalization of the Erdos-Renyi graphs where the edges between nodes of the same community are chosen with a certain probability ($p$) while the edges across communities are chosen with a certain other probability ($q$ where $q < p$). SBMs tend to have community structure and hence are used to model several applications including protein interactions, social network analysis and have been extensively studied in machine learning, statistics, theoretical computer science and network science literature. \\ 
\textbf{Perfect Graphs}: Perfect graphs are a class of graphs whose chromatic number $\chi(G)$ equals the size of the largest clique for every induced subgraph. Several important class of graphs including bipartite graphs, interval graphs, chordal graphs, caveman graphs etc. are all perfect graphs. We refer to \citep{ramirez2001perfect} for graph theoretical analysis of perfect graphs. Our interest in these graphs is due to the fact that the the chromatic number of these graphs can be computed in polynomial time \citep{lovasz2009characterization} and they coincide with the \lov{} $\vartheta$ number of the graph which we discuss next. \\ 
\begin{figure}[t]
	\centering
	\includegraphics[width=0.6\columnwidth]{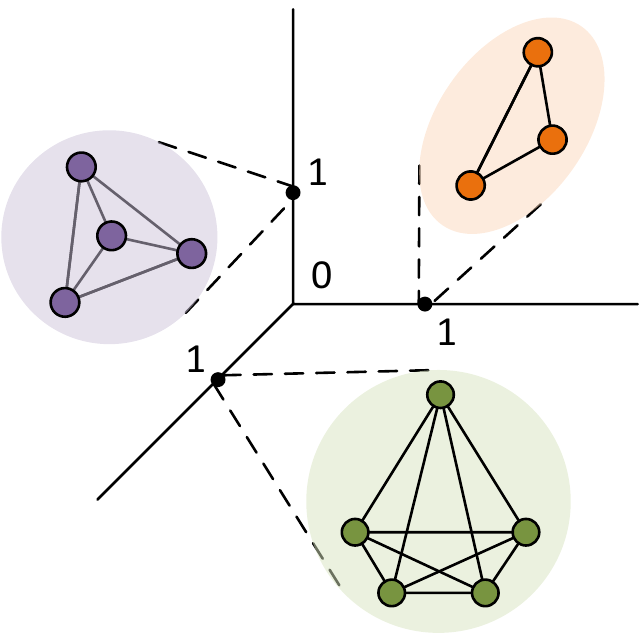}\\
	\caption{\small \lov{} embeddings for a graph consisting of set cliques are mapped orthogonal dimensions. Refer Section \ref{sec:prelim} for more details.}
	\label{fig:lovasz_axis}
\end{figure}
\textbf{\lov{} Embeddings}:  \lov{} \citep{Lovasz1979} introduced the concept of {\it orthogonal embedding} in the context of the problem of embedding a graph  $G = (V, E)$ on a unit sphere $\mathcal{S}^{d-1}$.
\begin{definition}[Orthogonal embedding \citep{Lovasz1979,lovasz99}]
An orthogonal embedding of a graph $G(V, E)$ with $|V| = n$, is a matrix $\mat{U} = [\vec{u}_1, \ldots, \vec{u}_n] \in \mathbb{R}^{d \times n}$ such that $\vec{u}_{i}^\top\vec{u}_{j} = 0$ whenever $(i,j) \notin E$ and $\vec{u}_{i} \in \mathcal{S}^{d-1} ~\forall i \in [n]$.
\end{definition}
Let $Lab(G)$ denotes the set of all possible orthogonal embeddings of the graph $G$, given by $Lab(G) = \{ \mat{U} | \mat{U} \text{ is an orthogonal embedding}\}$. The \lov{} theta function is defined as:
\begin{equation*}
 \vartheta(G) = \min_{\mat{U} \in Lab(G)} \min_{\vec{c} \in \mathcal{S}^{d-1}} \max_{i} (\vec{c}\top\vec{u}_i)^{-2}.
\end{equation*}
\vspace{-1.5mm} 

The famous sandwich theorem of \lov{}  \citep{Lovasz1979} states that $\alpha(G) \le \vartheta(G) \le \chi(\bar{G})$, where $\alpha(G)$ is the independence number of the graph and $\chi(\bar{G})$ is the chromatic number of the complement of $G$. Perfect graphs are of interest to us as both the above inequalities are equalities for them \citep{lovasz2009characterization}. \\
A few examples are helpful to gain intuition about the relation of \lov{} embeddings to community structures. For a complete graph, the complement can be colored using just one color, the \lov{} embeddings of all the nodes are trivial and in $1$-dimension. These embeddings are exactly the same as there are no orthogonal constraints imposed by the edges. As a generalization of this example  (Figure \ref{fig:lovasz_axis}), for a graph that is a disjoint union of $k$ cliques of possible variable number of nodes in each clique, the complement is a complete $k$ partite graph and hence can be colored using $k$ colors where each partition corresponds to a single color. It turns out the \lov{} embeddings for this graph are a set of orthonormal vectors in $\mathbb{R}^k$. In practice, the communities that occur are not exactly cliques i.e., not all edges in a community are connected to each other. However, the \lov{} embeddings still capture the necessary structure as we will see in our experiments. \\
\textbf{Graph convolutional networks (GCN):} GCNs \citep{gcniclr17} extend the idea of Convolutional Neural Networks (CNNs) for graphs. Let $G(V, E)$ be an undirected graph with adjacency matrix  $\mat{A}$ and let $\widetilde{\mat{A}} = \mat{A} + \mat{I}$ be the adjacency with added self-connections and $\widetilde{\mat{D}}_{ii} = \sum_{j}\widetilde{\mat{A}}_{ij}$. Let $\mat{X} \in \mathbb{R}^{n \times d}$ represent the input feature matrix of the nodes.
A simple two-layer GCN for the problem of semi-supervised node classification assumes the form :
 $f(\mat{X}, \mat{A}) = \mathrm{softmax} ( \hat{\mat{A}} \text{~} \text{ReLU}(\hat{\mat{A}}\mat{X}\mat{W}^{(0)})\mat{W}^{(1)}).$
Where, $\hat{\mat{A}} = \widetilde{\mat{D}}^{-\frac{1}{2}}\widetilde{\mat{A}}\widetilde{\mat{D}}^{-\frac{1}{2}}$ , $\mat{W}^{(0)} \in \mathbb{R}^{d\times h}$ is an input-to-hidden weight matrix for a hidden layer with $h$ units and $\mat{W}^{(1)} \in \mathbb{R}^{h \times F}$ is hidden-to-output weight matrix. The softmax activation function, defined as $\mathrm{softmax}(x_i) = \frac{1}{Z}\exp(x_i)$ with $Z = \sum_i \exp(x_i)$ is applied row-wise. 

For semi-supervised multi-class classification, cross-entropy loss over the labeled examples is given by
\begin{equation}
\label{eqn:multi-loss}
 \mathcal{L} = \sum_{l \in \mathcal{Y}_L}\sum_{f=1}^{F} Y_{lf} \text{ ln } Z_{lf},
\end{equation}
where, $\mathcal{Y}_L$ is the set of labeled nodes. The weights $\mat{W}^{(0)}$ and $\mat{W}^{(1)}$ are learnt using gradient descent.

\section{Motivating Example}
\vspace{-3mm}
\label{sec:motivation}

%

\begin{figure}[t]
	\centering
	\includegraphics[width=\columnwidth]{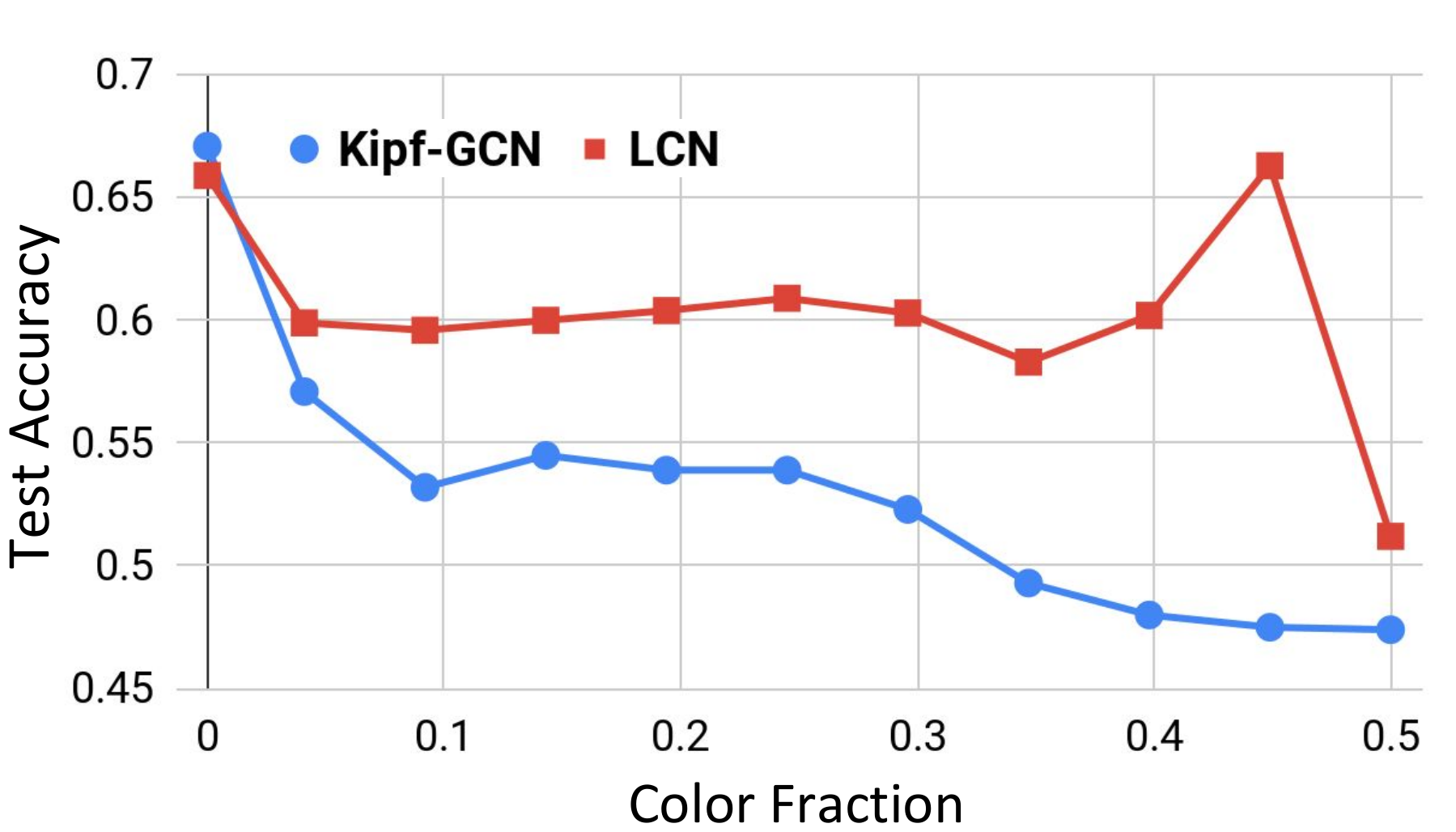}\\
	\caption{\small \label{fig:bipartite} Variation of test accuracy (higher is better) for GCN and LCN- with variation in the graph structure. GCN fails to perform as the number color fraction increases. Refer Section \ref{sec:motivation} for more details. }
\end{figure}

In this section we present a motivating example to demonstrate the use of the \lov{} orthogonal embeddings in the semi-supervised graph transduction task. In particular, we want to show how the embeddings learnt using the \lov{} kernel results in improved accuracy as a parameter called \emph{coloring fraction}, which we define below, varies. To illustrate our hypothesis, we consider a bipartite graph as input to the problem. The reason for this choice is that bipartite graphs are perfect and hence optimal coloring of both the graph $G$ and its complement  $\bar{G}$ (which is also perfect by the perfect graph theorem \citep{Chudnovsky2006}) are  easy to compute in polynomial time. Before explaining the experiment, we start with the following definition.  \\
\textbf{Coloring Fraction:} Given a graph $G = (V, E)$, consider the optimal coloring of the complement graph $\bar{G}$. According to this coloring scheme of the nodes, let $n_d$ represent the number of edges in $G$ such that the pair of nodes each edge connects have different colors. And let $n_t$ represent the total number of pairs of nodes in $G$ such that the nodes in each pair have different colors. Then the \emph{coloring fraction} is defined as $n_d/n_t$.\\
As an example, for a complete bipartite graph $G = K(n,n)$ on $2n$ nodes, the complement graph is the union of $2$ disjoint cliques of $n$ nodes each and hence the graph can be colored using $n$ colors. The coloring fraction  is then $\frac{n(n-1)}{2(n-1)(n)} = 0.5.$ The following proposition establishes how coloring fraction varies with removal of edges from a graph. 

\begin{prop} Let $G(V,E)$ be a graph where $\chi(\bar{G})$ is the chromatic number of the complement of $G$. Let $\beta(G)$ be the coloring fraction of $G$. Let $G'$ be the graph obtained from $G$ by removing a set of edges whose nodes have different colors with respect to the optimal coloring of $\bar{G}$.  Then $\chi(\bar{G'}) = \chi(\bar{G})$ whereas $\beta(G') < \beta(G)$.
\end{prop}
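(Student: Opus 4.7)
The plan is to unpack the two definitions and show that a \emph{single} optimal coloring works for both $\bar{G}$ and $\bar{G'}$, after which the rest is bookkeeping.

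First I would prove the chromatic number claim by a sandwich argument. Let $S$ denote the nonempty set of edges removed from $G$ to obtain $G'$, so $\bar{E'} = \bar{E} \cup S$ and thus $\bar{G}$ is a subgraph of $\bar{G'}$. Any proper coloring of $\bar{G'}$ remains proper on $\bar{G}$, giving $\chi(\bar{G'}) \ge \chi(\bar{G})$. For the reverse direction, fix an optimal coloring $c$ of $\bar{G}$ using $\chi(\bar{G})$ colors. By the hypothesis every edge $(u,v) \in S$ has $c(u) \neq c(v)$, so $c$ is still a proper coloring of $\bar{G'}$, yielding $\chi(\bar{G'}) \le \chi(\bar{G})$. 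Combining the two inequalities gives equality.

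Next I would prove the strict inequality $\beta(G') < \beta(G)$ by evaluating both coloring fractions relative to the same coloring $c$ (which by the previous step is simultaneously optimal for $\bar{G}$ and $\bar{G'}$). The quantity $n_t$ counts unordered pairs of vertices $\{u,v\}$ with $c(u) \neq c(v)$, which depends only on $c$ and the vertex set; since neither changes from $G$ to $G'$, we have $n_t(G') = n_t(G)$. The quantity $n_d(G)$ counts edges of $G$ whose endpoints are colored differently; going to $G'$ removes exactly $|S| \ge 1$ such edges and adds none, so $n_d(G') = n_d(G) - |S|$. Therefore
\begin{equation*}
\beta(G') = \frac{n_d(G) - |S|}{n_t(G)} < \frac{n_d(G)}{n_t(G)} = \beta(G),
\end{equation*}
which is the desired strict inequality (note $n_t(G) > 0$ as long as $\bar{G}$ uses at least two colors, which is the only nontrivial case).

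The only subtle point, and the main thing I would be careful about, is that an optimal coloring of $\bar{G}$ need not be unique, so $\beta$ is a priori defined with respect to some chosen optimum. The argument above circumvents this by exhibiting a \emph{common} optimal coloring that witnesses both $\beta(G)$ and $\beta(G')$; the proposition is then the natural statement that with this common witness, removing differently-colored edges strictly decreases the numerator while leaving the denominator fixed. No additional structural properties of perfect graphs or Lovász embeddings are required.
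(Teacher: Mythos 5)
Your proof is correct and follows essentially the same route as the paper: the optimal coloring of $\bar{G}$ remains a proper (and, since $\bar{G}$ is a subgraph of $\bar{G'}$, still optimal) coloring of $\bar{G'}$, and the coloring fraction drops because the numerator loses $|S|$ edges while the denominator is unchanged. Your write-up is in fact slightly more careful than the paper's, which asserts the change in $\beta$ "by definition" and does not address the non-uniqueness of the optimal coloring; also note the paper's proof contains a slip, saying the fraction "increases" before concluding $\beta(G') < \beta(G)$, which your explicit computation avoids.
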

\begin{proof}
It should be observed that the optimal coloring for $\bar{G}$ is also a \emph{valid} coloring of $\bar{G'}$ as the edges removed from $G$ are only from nodes with different colors with respect to coloring of $\bar{G}$. To see why it is also an \emph{optimal} coloring, we use contradiction. If there exists a coloring of $\bar{G'}$ with strictly smaller number of colors than $\chi(\bar{G})$, then we can remove edges to form $\bar{G'}$ to obtain $\bar{G}$ such that it is also a valid coloring of $\bar{G}$ as removing edges does not affect the validity of a coloring. However, this contradicts the optimality of the original coloring for $\bar{G}$. Thus $\chi(\bar{G}) = \chi(\bar{G'})$. Moreover, since we are removing edges from $G$, the coloring fraction increases by definition and hence $\beta(G') < \beta(G)$.
\end{proof}
The above proposition says that by removing edges carefully, a local property of the graph (coloring fraction) changes whereas a global property (chromatic number of complement graph) does not change. If the labels of nodes depends on the global property of the graph, then a natural question of interest is to study the sensitivity of algorithms to change in the local property while keeping the global property fixed. This is precisely what we do as we explain below.\\
We begin with a complete bipartite graph $K(n,n)$ whose coloring fraction as computed above is $0.5$. We remove $m$ edges in each step where the nodes of removed edges have different colors (w.r.t optimal coloring of $\bar{G}$).  In each case, the labels are assigned such that nodes with half the colors are assigned to class $0$ and remaining to class $1$. We compute the accuracy of a Laplacian based GCN model vs the proposed LCN model. In our experiment we set $n=50$ and $m=250$. The results averaged over $10$ random splits of $20\%-20\%-60\%$ train-validation-test are presented in Figure \ref{fig:bipartite}. It is clear that as the color fraction increases, the accuracy of the standard GCN drops while that of \lov{} does not. This is because the standard GCN depends on local connectivity property of the graph whereas the orthogonal labeling is done in accordance to the global coloring of the complement graph and is better captured by the proposed LCN.  \\
The above example motivates our study of \lov{} based embeddings in cases where the global structure of the graph is related to the class labels. With this motivation, we propose the \lov{} convolution network in the following section.

\vspace{-3mm}
\section{LCN: Proposed Model }
\vspace{-3mm}
\label{sec:model}

\begin{small}
 \begin{algorithm}[tb]
	\KwIn{$\mat{A}$, Adjacency matrix of Graph $G$}
	\KwOut{$\mat{K}$: Lovasz Kernel}
	$\begin{aligned}
	[SDP] \mat{Y} \gets & {\text{minimize}} \,\,\, t, \,\,\, \text{subject to:} \\
	&  \mat{Y} \succeq 0, \;\mat{Y}_{ij} = -1, \; \forall(i,j) \notin E, \; \mat{Y}_{ii} = t-1 
	\end{aligned}$

	$\mat{P} \in \mathbb{R}^{n \times n} \gets Cholesky(\mat{Y})$\;
	\If{$rank(\mat{P})<n$}{
		$c \gets$ random basis element from $Null(\mat{P})$\;
		$u_i = \frac{c+p_i}{\sqrt{t}}$\,  where $p_i$ is $i^{\text{th}}$ column of $\mat{P}$;
	}
	\If{$rank(\mat{P})=n$}{
		$p_i = [p_i \; 0] \in \mathbb{R}^{n+1} \; \forall\; i \in \{1,2, \ldots, n\}$\;
		$u_i = \frac{e_{n+1}+p_i}{\sqrt{t}}$ where $e_{n+1} \in \mathbb{R}^{n+1}$ is the standard basis element\;
	}
	$\mat{U} = [u_1 u_2 \ldots u_n]$\;
	$\mat{K} = \mat{U}^\top\mat{U}$\;
	
	\caption{Lovasz Kernel Matrix Computation}
	\label{alg:lovasz}
\end{algorithm}
\vspace{-1.5mm}	
\end{small}

\begin{figure*}[t!]
	\centering
	\includegraphics[width=\textwidth]{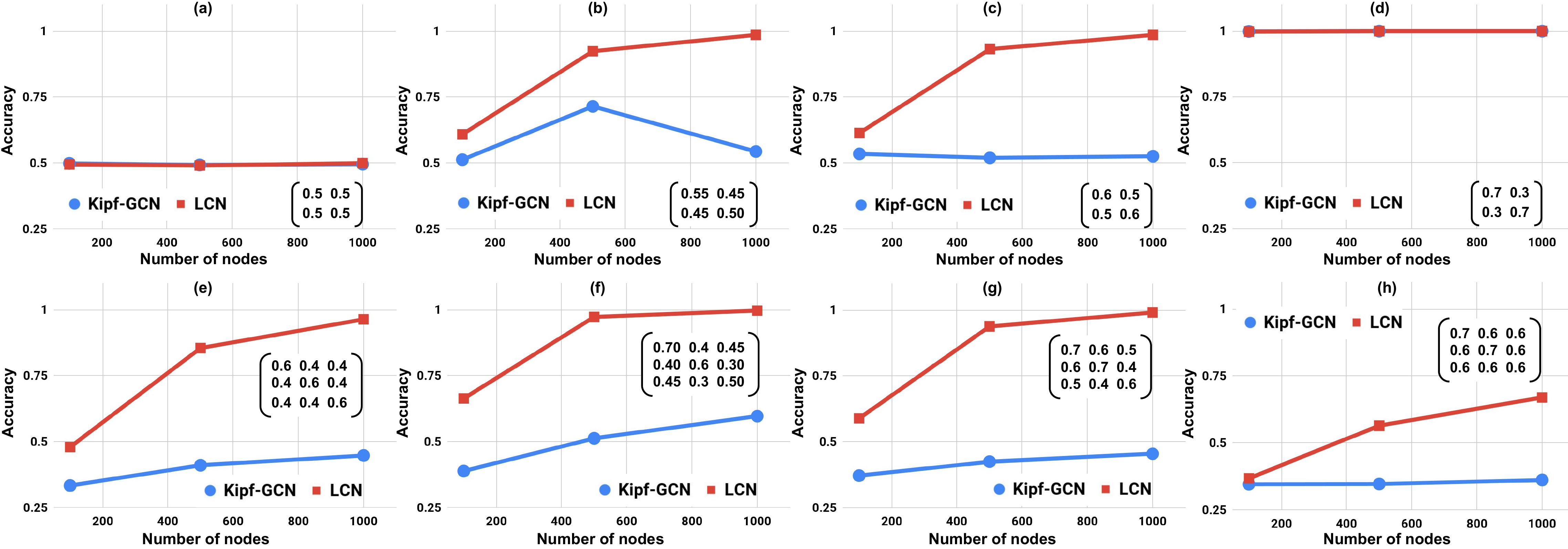}\\
	\caption{\small Test accuracy plots for various synthetically generated graphs from stochastic block model. The matrix in plot denotes the connection probabilities between classes.}
	\label{fig:sbm_results} 
\end{figure*}

In this section, we present our proposed method, the \lov{} Convolution Network (LCN), for semi-supervised graph transduction. As motivated in the previous section, when the class labels depend on the coloring (a global property) of the given graph, it is natural to start training a graph based convolution network which incorporates this property into learning. 
Let $Lab(G)$, as defined in Section \ref{sec:prelim}, represent the set of all possible orthonormal embeddings for a given graph $G$. The set of graph kernel matrices is defined as
\begin{equation*}
 \mathcal{K}(G) \coloneqq \{ \mat{K} \in \mathcal{S}_{n}^{+} | \mat{K}_{ii} = 1, \forall i \in [n]; \mat{K}_{ij} =0. \forall(i,j) \notin E\},
\end{equation*}
where $\mathcal{S}_{n}^{+}$ is the set of all positive semidefinite matrices. \citet{Jethava2013} showed the equivalence between $Lab(G)$ and $\mathcal{K}(G)$. Since $\mat{K} \in \mathcal{K}(G)$ is positive semidefinite, there exists a $\mat{U} \in \mathbb{R}^{d\times n}$ such that $\mat{K} = \mat{U}^\top\mat{U}$. It should be noted that $\mat{K}_{ij} = \vec{u}_{i}^{\top} \vec{u}_{j}$, where $\vec{u}_i$ is the $i$-th column of $\mat{U}$, which implies $\mat{U} \in Lab(G)$. Similarly, it can be shown that for any $\mat{U} \in Lab(G)$, $\mat{K} = \mat{U}^\top\mat{U} \in \mathcal{K}(G)$. 
Given a graph $G$, we follow the procedure described in \cite[Proposition 9.2.9]{lovasz99} for computing the \lov{} orthonormal embedding $\mat{U}$ and the associated kernel matrix $\mat{K}$ optimally. The procedure is summarized in Algorithm \ref{alg:lovasz}. Similar to the normalized Laplacian of a graph, the kernel matrix is also positive semidefinite. 

The kernel computation explained in Algorithm \ref{alg:lovasz} requires solving a Semi Definite Program (SDP), the computational complexity of which is $O(n^6)$. This becomes a huge bottle-neck for large scale datasets. 
Therefore, for large scale datasets, we exploit the following characterization of $\vartheta(G)$  given by \citet{luz2005convex}:

\begin{theorem}[\cite{luz2005convex}]
\label{thm:luz2005}
 For a graph $G = (V, E)$ with $|V| = n$, and let $\mat{C} \in \mathbb{R}^{n \times n}$ be any non-null symmetric matrix with $\mat{C}_{ij} = 0$ whenever $(i,j) \notin E$. Then, 
 \begin{eqnarray*}
\vartheta(G) = \min\limits_{\mat{C}} \nu(G,\mat{C}), \quad \text{where} \\
\nu(G,\mat{C}) = \max\limits_{\vec{x} \geq 0} 2 \vec{x}^{\top}\vec{e} - \vec{x}^{\top} (\frac{\mat{C}}{-\lambda_{\min}(\mat{C})} + \mat{I}) \vec{x} ,
 \end{eqnarray*}
 where $\vec{e} = [1,1,\ldots,1]^{\top}$ and $\lambda_{\min}(\mat{C})$ is the minimum eigen value of $\mat{C}$.

\end{theorem}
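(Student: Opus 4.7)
The plan is to recognize the inner maximization as a concave quadratic program and then match its value, via SDP duality, to a standard eigenvalue characterization of $\vartheta(G)$.

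First, I would verify that the inner problem is well-posed and admits a clean reformulation. For any feasible symmetric $\mat{C}$ supported on edges, write $\tau = -\lambda_{\min}(\mat{C}) > 0$ (if $\mat{C}$ were PSD, one could scale by $-1$ without violating the support constraint, so WLOG $\lambda_{\min}(\mat{C}) < 0$). Then $\mat{Q}(\mat{C}) := \mat{I} + \mat{C}/\tau \succeq 0$ with smallest eigenvalue exactly $0$. Hence $\max_{\vec{x} \geq 0} 2\vec{x}^\top \vec{e} - \vec{x}^\top \mat{Q}(\mat{C}) \vec{x}$ is a concave QP on the non-negative orthant. Homogenizing via $\vec{x} = s \vec{y}$ and maximizing in $s \geq 0$ yields the equivalent Rayleigh-type form $\nu(G, \mat{C}) = \max_{\vec{y} \geq 0,\, \vec{y} \neq 0} (\vec{e}^\top \vec{y})^2 / (\vec{y}^\top \mat{Q}(\mat{C}) \vec{y})$, which is the version I would actually manipulate.

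Next, I would invoke the standard SDP formulation of the Lov\'asz theta number: $\vartheta(G) = \max\{\vec{e}^\top \mat{X} \vec{e} : \mat{X} \succeq 0,\ \text{tr}(\mat{X}) = 1,\ \mat{X}_{ij} = 0\ \text{for}\ (i,j) \in E\}$, whose Lagrangian dual is the well-known eigenvalue formulation $\vartheta(G) = \min_{\mat{M}} \lambda_{\max}(\mat{J} + \mat{M})$ over symmetric $\mat{M}$ with zero diagonal and $\mat{M}_{ij} = 0$ on non-edges. The key algebraic observation bridging the two pictures is that if $\mat{C}$ is supported on edges and $\mat{X}$ is zero on edges, then $\langle \mat{C}, \mat{X}\rangle = 0$, so $\langle \mat{Q}(\mat{C}), \mat{X}\rangle = \text{tr}(\mat{X}) = 1$ for any feasible $\mat{X}$ and $\mat{C}$; this identity is the lever I would use to transport the SDP value into the QP value.

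For the easy inequality $\vartheta(G) \leq \min_{\mat{C}} \nu(G, \mat{C})$, I would fix an arbitrary feasible $\mat{C}$ and an optimal SDP $\mat{X}^\star$, decompose $\mat{X}^\star = \sum_k \lambda_k \vec{v}_k \vec{v}_k^\top$, and use $\langle \mat{Q}(\mat{C}), \mat{X}^\star\rangle = 1$ together with non-negativity structure (passing to $|\vec{v}_k|$ componentwise if needed, justified by a Perron-type argument applied to the optimizer of $\lambda_{\max}(\mat{J}+\mat{M})$) to exhibit a $\vec{y} \geq 0$ with $(\vec{e}^\top \vec{y})^2/(\vec{y}^\top \mat{Q}(\mat{C}) \vec{y}) \geq \vec{e}^\top \mat{X}^\star \vec{e} = \vartheta(G)$. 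For the tight direction, I would start from the optimal dual $\mat{M}^\star$ in the eigenvalue formulation, set $\mat{C}^\star = -\mat{M}^\star$ rescaled so $-\lambda_{\min}(\mat{C}^\star) = \vartheta(G)$ (or an equivalent normalization making $\mat{Q}(\mat{C}^\star)$ line up with $\mat{J}$), and use the corresponding leading eigenvector to attain equality.

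The main obstacle is precisely this tight direction: the denominator $-\lambda_{\min}(\mat{C})$ in $\mat{Q}(\mat{C})$ is a nonsmooth, non-homogeneous functional of $\mat{C}$, so the minimization over $\mat{C}$ does not reduce to a linear SDP on the nose. Handling this will require (a) a homogeneity/scale reduction to $-\lambda_{\min}(\mat{C}) = 1$ and (b) a Perron-Frobenius-style argument guaranteeing that the extremal eigenvector of $\mat{J} + \mat{M}^\star$ can be chosen non-negative so it is admissible in the QP over $\vec{x} \geq 0$. Once these are in place, weak duality of the SDP plus the identity $\langle \mat{Q}(\mat{C}), \mat{X}\rangle = 1$ close the argument.
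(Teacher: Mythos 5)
First, note that the paper does not prove this statement at all: it is quoted directly from Luz and Schrijver (2005) as an imported tool, so there is no in-paper proof to compare against, and your attempt has to be judged on its own. Your scaffolding is the right one (the SDP formulation of $\vartheta(G)$, its eigenvalue dual $\min_{\mat{M}}\lambda_{\max}(\mat{J}+\mat{M})$, the trace identity $\langle \mat{Q}(\mat{C}),\mat{X}\rangle=\operatorname{tr}(\mat{X})=1$, and the homogenization of the concave QP to a Rayleigh-type quotient), and your sketch of the attainment direction via $\mat{C}^\star\propto-\mat{M}^\star$ and $\vartheta(G)\,\mat{Q}(\mat{C}^\star)\succeq\mat{J}$ is essentially workable, modulo the normalization bookkeeping you already flag (note $\nu(G,\cdot)$ is invariant under positive rescaling of $\mat{C}$, so you cannot simply "rescale so that $-\lambda_{\min}(\mat{C}^\star)=\vartheta(G)$"; you must verify the normalization comes out right for the chosen $\mat{M}^\star$).

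The genuine gap is in what you call the easy direction, $\vartheta(G)\le\nu(G,\mat{C})$. The averaging argument over the spectral decomposition $\mat{X}^\star=\sum_k\lambda_k\vec{v}_k\vec{v}_k^\top$ does yield some $\vec{v}_k$ with $(\vec{e}^\top\vec{v}_k)^2/(\vec{v}_k^\top\mat{Q}\vec{v}_k)\ge\vartheta(G)$, but that vector need not be nonnegative, and the QP ranges only over $\vec{x}\ge 0$ (the constraint only shrinks the max, so it works against you here). Your proposed fix fails on both prongs: (i) replacing $\vec{v}$ by $|\vec{v}|$ helps the numerator but can strictly increase the denominator, since $|\vec{v}|^\top\mat{Q}|\vec{v}|-\vec{v}^\top\mat{Q}\vec{v}=2\sum_{\{i,j\}:v_iv_j<0}\mat{Q}_{ij}|v_i||v_j|$ and the feasible $\mat{C}$ (hence $\mat{Q}$) may carry positive off-diagonal weight exactly where $\vec{v}$ changes sign; (ii) a Perron--Frobenius argument requires an entrywise nonnegative (irreducible) matrix, which neither $\mat{J}+\mat{M}^\star$ nor $\mat{Q}(\mat{C})$ is in general. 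Nor can you assume the primal optimizer $\mat{X}^\star$ is entrywise nonnegative: imposing $\mat{X}\ge 0$ on that SDP defines Schrijver's $\vartheta'(G)$, which is strictly smaller than $\vartheta(G)$ for some graphs, so the nonnegativity you need is simply not available from this formulation. The standard repair is to manufacture the nonnegative test vector from a different, attained characterization of $\vartheta$ --- e.g.\ from an optimal orthonormal representation $(\vec{u}_i,\vec{c})$ take $x_i=(\vec{c}^\top\vec{u}_i)^2\ge 0$ --- rather than from eigenvectors of $\mat{X}^\star$. (A smaller point: your Rayleigh reformulation needs a convention for $\vec{y}\ge 0$ in the kernel of $\mat{Q}(\mat{C})$ with $\vec{e}^\top\vec{y}>0$, where $\nu(G,\mat{C})=+\infty$; this does not threaten the theorem but must be stated.)
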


Note that the matrix $\mat{K}_{LS} = \frac{\mat{A}}{-\lambda_{\min}(\mat{A})} + \mat{I}$ obtained by fixing $\mat{C} = \mat{A}$ in Theorem \ref{thm:luz2005} is positive semidefinite. Therefore, there exists a labeling $\mat{U} \in \mathbb{R}^{d \times n}$ such that $\mat{U}^{\top}\mat{U} = \mat{K}_{LS}$, which is referred to as LS labeling \citep{Jethava2013}. From Theorem \ref{thm:luz2005}, for any graph $G$, we have 
\begin{equation*}
\vartheta(G) \leq \nu(G, \mat{A}),
\end{equation*}
 an upper bound on $\vartheta(G)$, and the equality holds for a class of graphs called $\mathcal{Q}$ graphs \citep{luz1995upper}. 
 Computation of $\mat{K}_{LS}$ has a complexity of only $O(n^3)$, hence for large scale datasets we approximate the \lov{} kernel by $\mat{K} = \mat{K}_{LS}$. 

We propose to use the following two layered architecture for the problem of semi-supervised classification,
\begin{equation}
\label{eqn:lcn}
 f(\mat{X}, \mat{K}) = \mathrm{softmax} ( \mat{K} \text{~} \text{ReLU}(\mat{K}\mat{X}\mat{W}^{(0)})\mat{W}^{(1)}).
\end{equation}
Similar to GCN, we minimize the cross-entropy loss given in Equation \eqref{eqn:multi-loss} for semi-supervised multi-class classification. We use batch gradient descent for learning the weights $\mat{W}^{(0)}$ and $\mat{W}^{(1)}$. \\
We note that when the class labels are a non-linear mapping of the optimal coloring of $\bar{G}$, LCN with \lov{} kernel $\mathcal{K}(G)$ tunes the weights of the network to learn the mapping.


\section{Experimental Results}
\vspace*{-2mm}

In this section, we report the results of our experiments on several synthetic and real world datasets. We demonstrate the usefulness of the embeddings learnt using the \lov{} convolution networks over several state of the art methods including GCNs, SPORE, normalized and unnormalized laplacian based regularization along with other embeddings such as KS labelings that are described in \citep{Shivanna2015}. We demonstrate our results on Stochastic block models, real world MNIST datasets (binary and multiclass) and several real world UCI datasets. We also run experiments on large scale real world datasets Citeseer, Cora and Pubmed which are standard in GCN literature. In addition to this, we test the goodness of \lov{} based embeddings  in certain perfect graphs called caveman graphs which have been used to model simple social network communities. In addition to simple graphs, we also experiment with hypergraphs with clique expansion to see how the proposed method performs.

%
%
\textbf{Stochastic Block Model: Synthetic Data Experiments:} We start by describing our experiments on synthetically generated stochastic block model graphs. We perform the experiment on binary as well as three class classification problem. We report several settings of inter cluster and intra cluster probabilities in Figure \ref{fig:sbm_results} and corresponding embeddings in Figure \ref{fig:embeddings}. In each of the experiments, we varied the number of nodes from $100$ to $1000$ and used a $20\%-10\%-70\%$ train-validation-test split where we use early stopping during training \citep{gcniclr17}. The test accuracy is compared against the standard GCNs (denoted by Kipf-GCN). We make several observations from the results in Figure \ref{fig:sbm_results}. Firstly, as the inter and intra cluster probabilities get closer, it becomes much harder for GCN to classify well whereas LCN outperforms GCN by a significant margin. Secondly, as the size of the graph increases, the differences in connections become more critical and this is reflected in the increased accuracy with increase  in nodes for LCN, whereas accuracy of GCN is almost agnostic to the number of nodes. Finally, as the graph becomes denser i.e., as connection probabilities tend towards $1$, LCN performs much better than GCN in the three class setting. These results demonstrate the advantage of using LCNs over GCNs for semi-supervised classification tasks for SBM models.

\begin{figure*}[tb]
	
	\begin{subfigure}{.5\textwidth}
		\centering
		\includegraphics[width=0.95\linewidth]{./figures/embed_aistats/sbm5545}
		{\small (b)}
	\end{subfigure}
	\begin{subfigure}{.5\textwidth}
		\centering
		\includegraphics[width=0.95\linewidth]{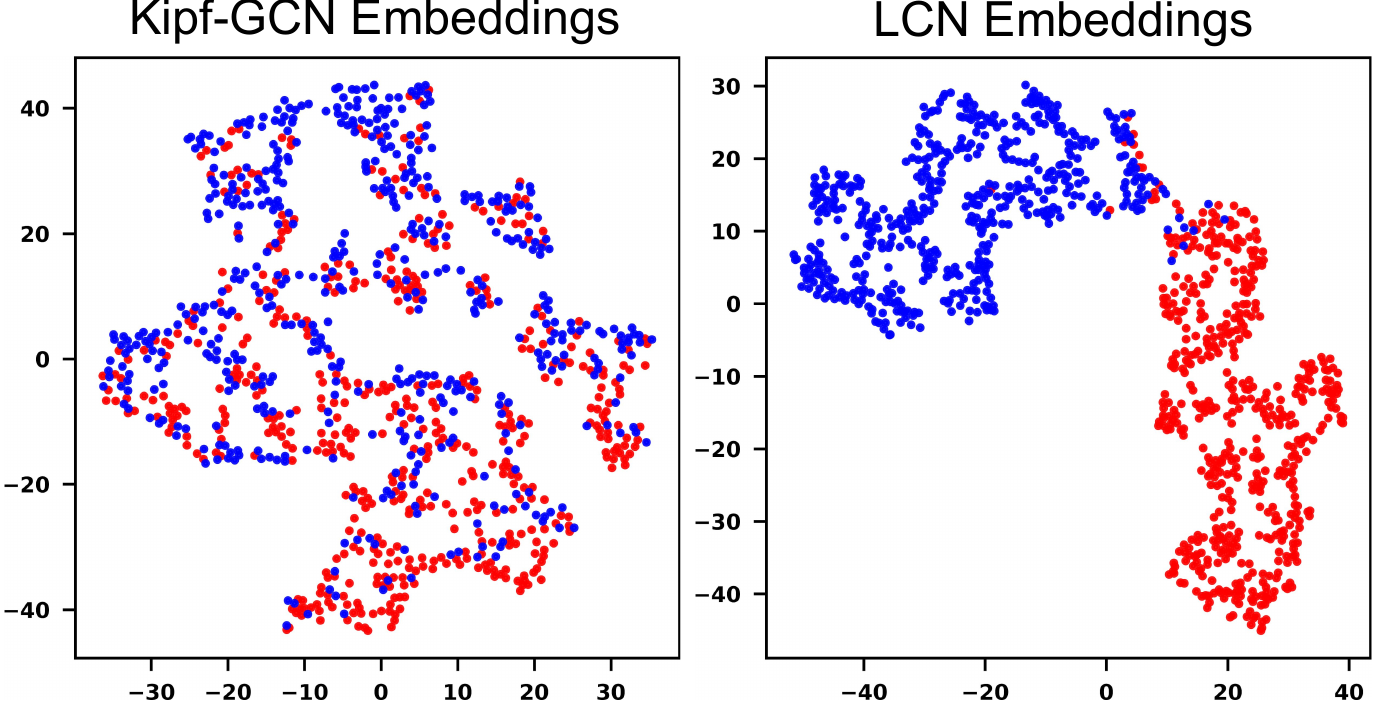}
		{\small (c)}
	\end{subfigure}
	\begin{subfigure}{.5\textwidth}
		\centering
		\includegraphics[width=0.95\linewidth]{./figures/embed_aistats/sbm6444}
		{\small (e) }
	\end{subfigure}
	\begin{subfigure}{.5\textwidth}
		\centering
		\includegraphics[width=0.95\linewidth]{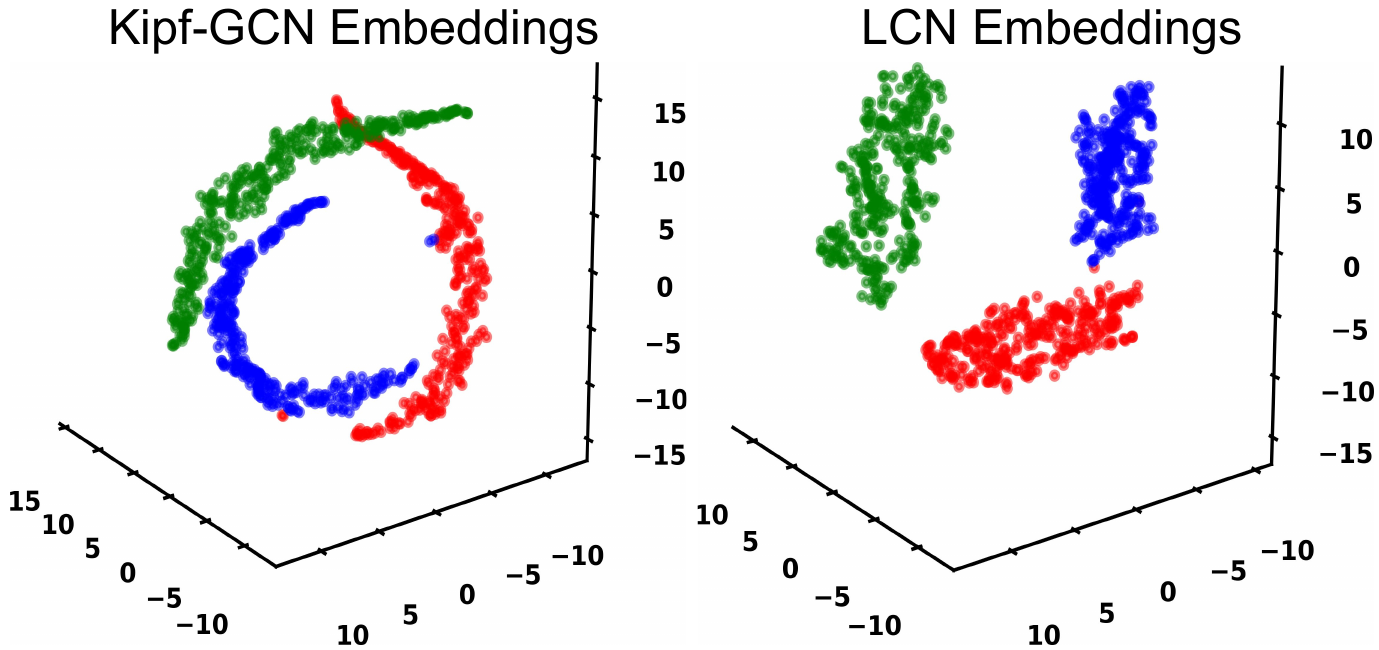} \\
		{\small (f)}
	\end{subfigure}
	\caption{\label{fig:embeddings}Embeddings learnt for settings corresponding to Figure \ref{fig:sbm_results} (b), (c), (e), (f) for n=1000}
	\vspace{-3mm}
\end{figure*}
\textbf{Real World Data Experiments:} We run several experiments on real world datasets including MNIST and UCI datasets. To make a fair comparison with state of the art, we first run the same set of binary classification experiments as in \citep{Shivanna2015} and compare it with GCNs \citep{gcniclr17}, Graph Partition Neural Networks (GPNN) \citep{liao2018graph} and our proposed LCNs. These include experiments on $6$ UCI datasets (breast-cancer $n=683$, diabetes $n=768$, fourclass $n=862$, heart $n=270$, ionosphere $n=351$ and sonar $n=208$) and experiments on certain pair of classes from a subsampled set of images from the MNIST datasets. Table \ref{tab:mnist2} reports the results for these experiments with various input embeddings including Laplacian (normalized, unnormalized), KS embedding and others as considered in \citep{Shivanna2015}. For MNIST datasets, the results are averaged over five randomly sampled graphs and for UCI datasets, the results are averaged over five random splits. As can be seen, LCN performs significantly better than SPORE and performs much better than GCNs in all datasets except two. In addition to the binary classification experiment, we also conducted three class  classification on $500$ and $2000$ images from MNIST (randomly subsampled from classes $1$, $2$ and $7$) and also $10$ class classification where we randomly subsample 2000 images from all classes. The results are reported in Table \ref{tab:mnist3} As the classes increase, LCN significantly outperforms other GCN baselines.  

To be consistent with the GCN literature, we also run experiments on large scale datasets Citeseer, Cora  and Pubmed. All these datasets are citation networks, where each document is represented as a node in the graph with an edge between nodes indicating the citation relation. The aim is to classify the documents into one of the predefined classes. We use the same splits as in \citep{planetoid_icml16}. Table \ref{tbl:large_datasets} shows the results on these large scale datasets, as explained in Section \ref{sec:model} we use the approximate $K_{LS}$ kernel for these datasets and LCN (LS) refers to this setting. LCN outperforms other state-of-the-art baselines on all three datasets, Citeseer, Cora and Pubmed. Node2vec \citep{grover2016node2vec} is an unsupervised method for learning node representations for a given graph using just the structure of the graph. In table \ref{tbl:large_datasets}, \emph{Node2vec} refers to the model when the kernel is obtained from normalized Node2vec embeddings, which achieves a  significantly poor performance.

\begin{table*}[tb]
	\centering
		
		\begin{tabular}{lccccccc}
			\toprule
			Dataset & Un-Lap        & N-Lap & KS    & SPORE & Kipf-GCN       & GPNN         & LCN \\ 
			\midrule
			breast-cancer & 88.2 & 93.3 & 92.8 & 96.7 & \textbf{97.6} & 95.5 &  97.2 \\
			diabetes & 68.9 & 69.3 & 69.4 & 73.3 & 71.4 & 68.0 &  \textbf{76.3} \\
			fourclass & 70.0 & 70.0 & 70.4 & 78.0 & 80.5 & 73.9 &  \textbf{81.7} \\
			heart & 72.0 & 75.6 & 76.4 & 82.0 & \textbf{85.1 }& 81.1 &  82.5 \\
			ionosphere & 67.8 & 68.0 & 68.1 & 76.1 & 76.1 & 70.0 &  \textbf{87.9} \\
			sonar & 58.8 & 59.0 & 59.3 & 63.9 & 71.4 & 64.8 &  \textbf{73.2} \\
			\midrule
			mnist-500 1 vs 2 & 75.6 & 80.6 & 79.7 & 85.8 & 98.0 & 96.2 &  \textbf{99.0} \\
			mnist-500 3 vs 8 & 76.9 & 81.9 & 83.3 & 86.1 & 92.3 & 83.1 &  \textbf{93.7} \\
			mnist-500 4 vs 9 & 68.4 & 72.0 & 72.2 & 74.9 & \textbf{89.4} & 88.5 &  83.3 \\
			\midrule
			mnist-2000 1 vs 2 & 83.8 & 96.2 & 95.0 & 96.7 & 99.0 & 97.5 &  \textbf{99.2} \\
			mnist-2000 3 vs 8 & 55.2 & 87.4 & 87.4 & 91.4 & 94.7 & 89.6 &  \textbf{95.7} \\
			mnist-2000 1 vs 7 & 90.7 & 96.8 & 96.6 & 97.3 & \textbf{98.8} & 96.4 &  98.7 \\
			\bottomrule
		\end{tabular}
	\caption{\label{tab:mnist2} Binary Classification with Random label-to-color assignment in UCI and MNIST datasets.}
\end{table*}


\begin{table}[tb]
		\centering
		\small
		\begin{tabular}{lccc}
			\toprule
			Dataset &  Kipf-GCN  & GPNN & LCN\\
			\midrule
			mnist 500 127  & 96.1  &  93.8  &  \textbf{97.5} \\
			mnist 2000 127 & 97.2  &  94.4  &  \textbf{97.4} \\
			mnist all 2000 & 84.4  &  56.9  &  \textbf{85.1} \\
			
			\bottomrule	
		\end{tabular}
	\caption{\label{tab:mnist3} Multi class Classification in MNIST dataset \label{tbl:spore2}}
\end{table}

\begin{table}[tb]
	\centering
	\small
	\begin{tabular}{lcccc}
		\toprule
		Dataset & Node2vec    &    Kipf-GCN    &    GPNN &   LCN \\
		\midrule
		Citeseer & 23.1  &  70.3  &  69.7  &  \textbf{73.5}\\
		Cora & 31.9  &  81.5  &  81.8  &  \textbf{82.6} \\
		Pubmed & 42.3  &  79    &  79.3  &  \textbf{79.7} \\
		\bottomrule	
	\end{tabular}
\caption{\label{tbl:large_datasets} Performance for semi-supervised on Citeseer, Cora, Pubmed datasets}

\end{table}

\begin{table}[tb]
	\centering
	\small
	 \begin{tabular}{lccccc}
		\toprule
		$(n,k)$ & Kipf-GCN & LCN  & Avg\_same & Avg\_diff  \\
		\midrule
		(50, 10) & 0.92 & {\bf 0.93}  & 0.83 & -0.008 \\
		(75, 6) & 0.77 & {\bf 0.80}  & 0.80 & -0.005 \\
		(100, 5) & 0.71 & {\bf 0.73}  & 0.79 & -0.003 \\
		(100, 7) & {\bf 0.81} & {\bf 0.81}  & 0.80 & -0.003 \\
		\bottomrule  
	\end{tabular}
	\caption{\label{fig:caveman} Caveman graph experiment: Average test accuracy  of Kipf-GCN and LCN on various caveman graphs.}
\end{table}

%

\begin{figure}[tb]
	 \centering
	\includegraphics[width=\columnwidth]{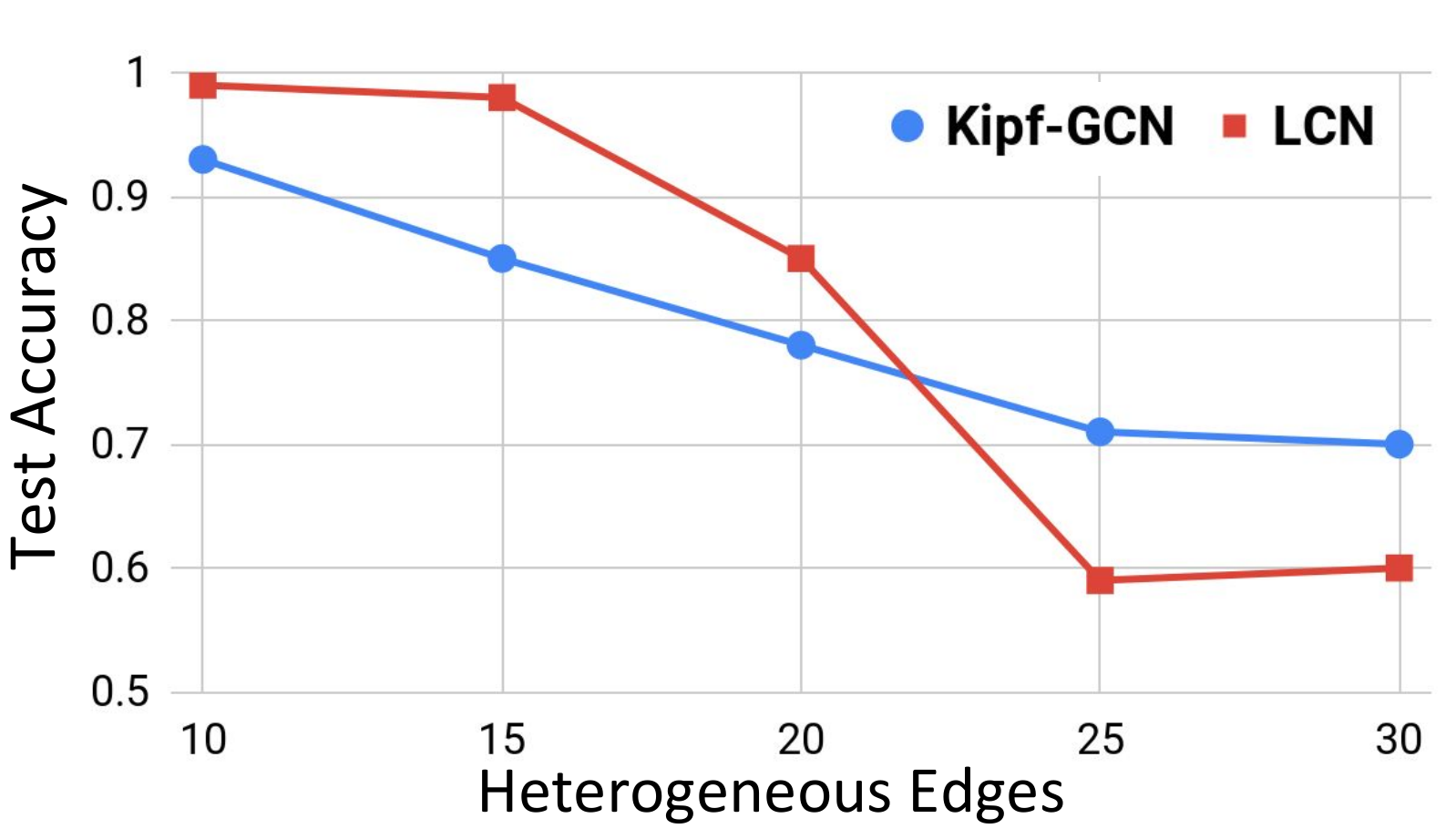}\\	
	\caption{\small (c) Behavior of test accuracy with increase in the heterogeneous edges in the hypergraph.\label{fig:hyper_graph}}
\end{figure}



\textbf{Caveman graph: Goodness of Embeddings Experiment:} A connected caveman graph of size $(n,k)$ is formed by modifying a set of isolated $k$-cliques (or \emph{caves}) by removing one edge from each clique and using it to connect to a neighboring clique along a central cycle such that all $n$ cliques form a single unbroken loop \citep{Watts1999}. Caveman graphs are perfect graphs and are used for modeling simple communities in social networks \citep{Kang2011}. We run our experiments on various synthetic caveman graphs. For every caveman graph, we compute the optimal coloring of the complement graph. We consider a binary classification setting and randomly assign nodes corresponding to half of the colors to class $0$ and the other half to the  class $1$. 
We work with a $20\%-20\%-60\%$ train-validation-test splits, for every graph we run the experiments on 10 random splits. Table \ref{fig:caveman} shows the test accuracy of Kipf-GCN and LCN. In Table \ref{fig:caveman}, \emph{Avg\_same} stands for average inner product of the representations of the nodes with same color and \emph{Avg\_diff} stands for that of the nodes with different colors. As we see, LCN performs better on all cases considered. Also, the average dot products of nodes with same color is high and those with different colors is close to zero showing that the representations are as well separated as possible for nodes with different colors. 

\textbf{Hypergraphs: Homogeneous vs Heterogeneous Edges Experiment:} Though our main focus is on simple graphs, we also experiment with synthetic hypergraphs. A hypergraph is a generalized version of a graph where a hyper edge consists of a set of nodes.  However, for hypergraphs, to the best of our knowledge, orthogonal embeddings and \lov{} theta function are not defined. Therefore, we consider the clique expansion of the hypergraphs \citep{lhg06}. Clique expansion creates a simple graph from a hypergraph by replacing every hyperedge with a clique. In our experiments, we generated a hypergraph of 100 nodes with every hyperedge containing 35 nodes. We consider a binary classification setting and assign randomly 50 nodes to one class and the other 50 to a different class. We randomly create 20 hyperedges such that all the nodes in the hyperedge belong to same class, we call these edges homogeneous edges. We also create $m$ random hyperedges such that the label distribution of the nodes in the hyperedge is 2:3, we call these edges heterogeneous. We vary $m$ between 10 and 30 and create multiple hypergraphs. We work with a 20\%-20\%-60\% train-validation-test split and average across ten runs per each hypergraph. Figure \ref{fig:hyper_graph} shows the behavior of test accuracy with increase in the number of heterogeneous edges.  As one can see LCN performs much better than GCN when the number of heterogeneous edges are small (and hence the clique expansion has a community-like structure) whereas GCNs tend to perform better with increase in the number of heterogeneous edges. 

\vspace{-3mm}
\section{Conclusion}
\vspace{-3mm}
\label{sec:conclusion}
We propose \lov{} Convolution Networks for the problem of semi supervised learning on graphs. Our analysis shows settings where LCNs perform much better than GCNs. Our results on real world and synthetic datasets demonstrate the superior embeddings learnt by LCNs and show that they significantly outperform GCNs. Future work includes detailed analysis of \lov{} embeddings for hypergraphs, use of LCNs for community detection and clustering. 
\bibliographystyle{apalike}

\bibliography{references}

\begin{thebibliography}{}

\bibitem[Agarwal, 2006]{shivani_icml06}
Agarwal, S. (2006).
\newblock Ranking on graph data.
\newblock In {\em International Conference on Machine Learning}.

\bibitem[Ando and Zhang, 2007]{Ando2007}
Ando, R.~K. and Zhang, T. (2007).
\newblock Learning on graph with laplacian regularization.
\newblock In {\em Advances in neural information processing systems}.

\bibitem[Atwood and Towsley, 2016]{dcnn_nips16}
Atwood, J. and Towsley, D. (2016).
\newblock Diffusion-convolutional neural networks.
\newblock In {\em Advances in neural information processing systems}.

\bibitem[Auer et~al., 2007]{dbpedia07}
Auer, S., Bizer, C., Kobilarov, G., Lehmann, J., Cyganiak, R., and Ives, Z.
  (2007).
\newblock Dbpedia: A nucleus for a web of open data.
\newblock In {\em Proceedings of the 6th International The Semantic Web and 2Nd
  Asian Conference on Asian Semantic Web Conference}.

\bibitem[Belkin et~al., 2006]{sslintro06}
Belkin, M., Niyogi, P., and Sindhwani, V. (2006).
\newblock Manifold regularization: A geometric framework for learning from
  labeled and unlabeled examples.
\newblock {\em JMLR}.

\bibitem[Bollacker et~al., 2008]{freebase08}
Bollacker, K., Evans, C., Paritosh, P., Sturge, T., and Taylor, J. (2008).
\newblock Freebase: A collaboratively created graph database for structuring
  human knowledge.
\newblock In {\em Proceedings of the 2008 ACM SIGMOD International Conference
  on Management of Data}.

\bibitem[Bruna et~al., 2014]{gcn_iclr14}
Bruna, J., Zaremba, W., Szlam, A., and LeCun, Y. (2014).
\newblock Spectral networks and locally connected networks on graphs.
\newblock In {\em ICLR}.

\bibitem[Chen et~al., 2018]{Chen2018}
Chen, J., Ma, T., and Xiao, C. (2018).
\newblock Fastgcn: Fast learning with graph convolutional networks via
  importance sampling.
\newblock {\em arXiv preprint arXiv:1801.10247}.

\bibitem[Chudnovsky et~al., 2006]{Chudnovsky2006}
Chudnovsky, M., Robertson, N., Seymour, P., and Thomas, R. (2006).
\newblock The strong perfect graph theorem.
\newblock {\em Annals of mathematics}.

\bibitem[Condon and Karp, 1999]{condon1999algorithms}
Condon, A. and Karp, R.~M. (1999).
\newblock Algorithms for graph partitioning on the planted partition model.
\newblock In {\em Randomization, Approximation, and Combinatorial Optimization.
  Algorithms and Techniques}.

\bibitem[Defferrard et~al., 2016]{chebnet_nips16}
Defferrard, M., Bresson, X., and Vandergheynst, P. (2016).
\newblock Convolutional neural networks on graphs with fast localized spectral
  filtering.
\newblock In {\em Advances in neural information processing systems}.

\bibitem[Duvenaud et~al., 2015]{gcn_nips15}
Duvenaud, D.~K., Maclaurin, D., Iparraguirre, J., Bombarell, R., Hirzel, T.,
  Aspuru-Guzik, A., and Adams, R.~P. (2015).
\newblock Convolutional networks on graphs for learning molecular fingerprints.
\newblock In {\em Advances in neural information processing systems}.

\bibitem[Getoor and Taskar, 2007]{getoor2007introduction}
Getoor, L. and Taskar, B. (2007).
\newblock {\em Introduction to statistical relational learning}.

\bibitem[Giles et~al., 1998]{citeseer98}
Giles, C.~L., Bollacker, K.~D., and Lawrence, S. (1998).
\newblock Citeseer: An automatic citation indexing system.
\newblock In {\em Proceedings of the Third ACM Conference on Digital
  Libraries}.

\bibitem[Grover and Leskovec, 2016]{grover2016node2vec}
Grover, A. and Leskovec, J. (2016).
\newblock node2vec: Scalable feature learning for networks.
\newblock In {\em Proceedings of the 22nd ACM SIGKDD international conference
  on Knowledge discovery and data mining}, pages 855--864. ACM.

\bibitem[Henaff et~al., 2015]{gcn_arxiv15}
Henaff, M., Bruna, J., and LeCun, Y. (2015).
\newblock Deep convolutional networks on graph-structured data.
\newblock {\em CoRR}.

\bibitem[Holland et~al., 1983]{holland1983stochastic}
Holland, P.~W., Laskey, K.~B., and Leinhardt, S. (1983).
\newblock Stochastic blockmodels: First steps.
\newblock {\em Social networks}.

\bibitem[Jain et~al., 2016]{Jain2016}
Jain, A., Zamir, A.~R., Savarese, S., and Saxena, A. (2016).
\newblock Structural-rnn: Deep learning on spatio-temporal graphs.
\newblock In {\em Proceedings of the IEEE Conference on Computer Vision and
  Pattern Recognition}.

\bibitem[Jethava et~al., 2013]{Jethava2013}
Jethava, V., Martinsson, A., Bhattacharyya, C., and Dubhashi, D. (2013).
\newblock Lovasz theta function, svms and finding dense subgraphs.
\newblock {\em Journal of Machine Learning Research}.

\bibitem[Johansson et~al., 2014]{Johansson2014}
Johansson, F., Jethava, V., Dubhashi, D., and Bhattacharyya, C. (2014).
\newblock Global graph kernels using geometric embeddings.
\newblock In {\em Proceedings of the 31st International Conference on Machine
  Learning, ICML 2014, Beijing, China, 21-26 June 2014}.

\bibitem[Kang and Faloutsos, 2011]{Kang2011}
Kang, U. and Faloutsos, C. (2011).
\newblock Beyond'caveman communities': Hubs and spokes for graph compression
  and mining.
\newblock In {\em Data Mining (ICDM), 2011 IEEE 11th International Conference
  on}.

\bibitem[Kipf and Welling, 2017]{gcniclr17}
Kipf, T.~N. and Welling, M. (2017).
\newblock Semi-supervised classification with graph convolutional networks.
\newblock In {\em ICLR}.

\bibitem[Leskovec et~al., 2010a]{sn_www10}
Leskovec, J., Huttenlocher, D., and Kleinberg, J. (2010a).
\newblock Predicting positive and negative links in online social networks.
\newblock In {\em Proceedings of the 19th International Conference on World
  Wide Web}.

\bibitem[Leskovec et~al., 2010b]{sn10}
Leskovec, J., Huttenlocher, D., and Kleinberg, J. (2010b).
\newblock Signed networks in social media.
\newblock In {\em Proceedings of the SIGCHI Conference on Human Factors in
  Computing Systems}.

\bibitem[Li et~al., 2018]{co_self_gcn_aaai18}
Li, Q., Han, Z., and Wu, X.-M. (2018).
\newblock Deeper insights into graph convolutional networks for semi-supervised
  learning.
\newblock In {\em AAAI}.

\bibitem[Liao et~al., 2018]{liao2018graph}
Liao, R., Brockschmidt, M., Tarlow, D., Gaunt, A.~L., Urtasun, R., and Zemel,
  R. (2018).
\newblock Graph partition neural networks for semi-supervised classification.
\newblock {\em arXiv preprint arXiv:1803.06272}.

\bibitem[Lov{\'a}sz, 1979]{Lovasz1979}
Lov{\'a}sz, L. (1979).
\newblock On the shannon capacity of a graph.
\newblock {\em IEEE Transactions on Information theory}.

\bibitem[Lov{\'a}sz, 2009]{lovasz2009characterization}
Lov{\'a}sz, L. (2009).
\newblock A characterization of perfect graphs.
\newblock In {\em Classic Papers in Combinatorics}, pages 447--450. Springer.

\bibitem[Lovász and Vesztergombi, 1999]{lovasz99}
Lovász, L. and Vesztergombi, K. (1999).
\newblock Geometric representations of graphs.
\newblock In {\em IN PAUL ERDÖS, PROC. CONF}.

\bibitem[Lu and Getoor, 2003]{cn03}
Lu, Q. and Getoor, L. (2003).
\newblock Link-based classification.
\newblock In {\em Proceedings of the Twentieth International Conference on
  International Conference on Machine Learning}.

\bibitem[Luz, 1995]{luz1995upper}
Luz, C.~J. (1995).
\newblock An upper bound on the independence number of a graph computable in
  polynomial-time.
\newblock {\em Operations Research Letters}, 18(3):139--145.

\bibitem[Luz and Schrijver, 2005]{luz2005convex}
Luz, C.~J. and Schrijver, A. (2005).
\newblock A convex quadratic characterization of the lov{\'a}sz theta number.
\newblock {\em SIAM Journal on Discrete Mathematics}, 19(2):382--387.

\bibitem[McAuley and Leskovec, 2012]{sn12}
McAuley, J. and Leskovec, J. (2012).
\newblock Learning to discover social circles in ego networks.
\newblock In {\em Proceedings of the 25th International Conference on Neural
  Information Processing Systems - Volume 1}.

\bibitem[Ram{\'\i}rez-Alfons{\'\i}n and Reed, 2001]{ramirez2001perfect}
Ram{\'\i}rez-Alfons{\'\i}n, J.~L. and Reed, B.~A. (2001).
\newblock {\em Perfect graphs}, volume~44.
\newblock Wiley.

\bibitem[Sen et~al., 2008]{ccnd08}
Sen, P., Namata, G.~M., Bilgic, M., Getoor, L., Gallagher, B., and Eliassi-Rad,
  T. (2008).
\newblock Collective classification in network data.
\newblock {\em AI Magazine}.

\bibitem[Shivanna et~al., 2015]{Shivanna2015}
Shivanna, R., Chatterjee, B.~K., Sankaran, R., Bhattacharyya, C., and Bach, F.
  (2015).
\newblock Spectral norm regularization of orthonormal representations for graph
  transduction.
\newblock In {\em Advances in Neural Information Processing Systems 28}. Curran
  Associates, Inc.

\bibitem[Subramanya and Talukdar, 2014]{subramanya2014graph}
Subramanya, A. and Talukdar, P.~P. (2014).
\newblock Graph-based semi-supervised learning.
\newblock {\em Synthesis Lectures on Artificial Intelligence and Machine
  Learning}, 8(4):1--125.

\bibitem[Suchanek et~al., 2007]{yago07}
Suchanek, F.~M., Kasneci, G., and Weikum, G. (2007).
\newblock Yago: A core of semantic knowledge.
\newblock In {\em Proceedings of the 16th International Conference on World
  Wide Web}.

\bibitem[Watts, 1999]{Watts1999}
Watts, D.~J. (1999).
\newblock Networks, dynamics, and the small-world phenomenon.
\newblock {\em American Journal of sociology}.

\bibitem[Yang et~al., 2016]{planetoid_icml16}
Yang, Z., Cohen, W.~W., and Salakhutdinov, R. (2016).
\newblock Revisiting semi-supervised learning with graph embeddings.
\newblock In {\em International Conference on Machine Learning}.

\bibitem[Zhou et~al., 2004]{Zhou2004}
Zhou, D., Bousquet, O., Lal, T.~N., Weston, J., and Sch{\"o}lkopf, B. (2004).
\newblock Learning with local and global consistency.
\newblock In {\em Advances in neural information processing systems}.

\bibitem[Zhou et~al., 2006]{lhg06}
Zhou, D., Huang, J., and Sch{\"{o}}lkopf, B. (2006).
\newblock Learning with hypergraphs: Clustering, classification, and embedding.
\newblock In {\em Advances in neural information processing systems}.

\bibitem[Zhu et~al., 2003]{sslintroicml03}
Zhu, X., Ghahramani, Z., and Lafferty, J. (2003).
\newblock Semi-supervised learning using gaussian fields and harmonic
  functions.
\newblock In {\em International Conference on Machine Learning}.

\bibitem[Zhuang and Ma, 2018]{dual_gcn_www18}
Zhuang, C. and Ma, Q. (2018).
\newblock Dual graph convolutional networks for graph-based semi-supervised
  classification.
\newblock In {\em TheWebConf}.

\bibitem[Zitnik and Leskovec, 2017]{protein17}
Zitnik, M. and Leskovec, J. (2017).
\newblock Predicting multicellular function through multi-layer tissue
  networks.
\newblock {\em Bioinformatics}.

\end{thebibliography}

\end{document}